\newcommand{\X}{\ensuremath{\mathcal{X}}}
\newcommand{\F}{\ensuremath{\mathcal{F}}}
\newcommand{\Y}{\ensuremath{\mathcal{Y}}}
\newcommand{\D}{\ensuremath{\mathcal{D}}}
\DeclareMathOperator*{\argmax}{arg\,max}
\newtheorem{definition}{Definition}
\newtheorem{theorem}{Theorem}
\newtheorem{observation}{Observation}
\newtheorem{proposition}{Proposition}
\newtheorem{example}{Example}
\title{Axiomatic Choice}
\author{
    Submission Number: 4319
}
\author{
Ben Abramowitz$^{1,2}$
\and
Nicholas Mattei$^1$\and
\affiliations
$^1$Tulane University\\
$^2$San Diego State University\\
\emails
\{babramow, nsmattei\}@tulane.edu
}
\begin{document}

\maketitle

\begin{abstract}
People care about decision outcomes and how decisions get made, both when making decisions and reflecting on decisions. But formalizing the full range of normative concerns that drive decisions is an open challenge.
We introduce Axiomatic Choice as a framework for making and evaluating decisions based on formal normative statements about decisions. These statements, or \textit{axioms}, capture a wide array of desiderata, e.g., ethical constraints, beyond the typical treatment in Social Choice.
Using our model of axioms and decisions we define key properties and introduce a taxonomy of axioms which may be of general interest. We then use these properties and our taxonomy to define the Decision-Evaluation Paradox, formalize the concepts of transparency and deception in explaining and justifying decisions, and reveal the limits of existing methods using axioms to make decisions.
\end{abstract}

\section{Introduction}
When it comes to decisions, humans care about both decision outcomes and how decisions get made. We want decisions to be fair, efficient or democratic; for models and mechanisms to be transparent, auditable, or privacy-preserving; for rules to be simple or understandable; for outcomes to be equal or just; and for AI agents to act in alignment with human values. But to say precisely what each of these means, and ensure these goals are achieved, we need formalisms capable of capturing and operating on the wide range of normative positions held by human and non-human agents. To this end we introduce \textit{Axiomatic Choice}, a layer of abstraction above Social Choice.

There is an open call for formal and computational models to make the science of democratic decisions more rigorous~\citep{grossi2024enabling}.
Currently the primary tools for formalizing democratic decisions and norms come from Social Choice.
In Social Choice, multiple agents' preferences over outcomes are aggregated by some decision-making procedure or \textit{rule} that selects an outcome. Axioms, which are properties of rules, are used to characterize rules and to prove the incompatibility of certain desiderata~\citep{arrow2010handbook}.
%
%

We can split much of the active AI research on Social Choice axioms into those that use axioms to make decisions~\citep{matone2024deepvoting,schmidtlein2023voting,armstrong2019machine} and those that use axioms to evaluate, explain or justify decisions~\citep{boixel2022calculus,nardi2022graph,suryanarayana2022justifying,suryanarayana2022explainability,boixel2020automated,peters2020explainable,procaccia2019axioms,kirsten2018towards,cailloux2016arguing}. 
However, these approaches are currently limited by their focus on the narrow set of axioms that appear in Social Choice. Axiomatic Choice allows us to encode a broader range of realistic normative positions as axioms, make decisions according to these axioms, and generate explanations or justifications that agents will accept. It allows us to reason about aggregating agent preferences over decisions, rather than only preferences over outcomes.

Reinforcement learning from human feedback (RLHF) is a technique for training models to align with human goals and is commonly used to fine-tune LLMs. One of the key considerations for RLHF is the format in which human provides feedback~\citep{casper2023open}. If the domain of human feedback is restricted, say to ordinal preferences over a fixed set of candidates, then the feedback may not fully capture the normative concerns of the human, but if the feedback is unconstrained like natural language, then there is a problem of interpreting the feedback, e.g., whether and how to change the objective (reward function) or whether to add a new hard constraint on behavior. This balance between interpretability and expressiveness creates the need for formalisms capable of capturing a broad array of normative positions and using them for learning and applying trained models. There is the additional problem of conflicting feedback from multiple humans who hold a diverse set of values, and so methods of aggregating values are necessary. These concerns extend to any human-in-the-loop system concerned with value alignment~\citep{gabriel2020artificial,mosqueira2023human}, and it has been argued that tools from Social Choice can address these issues~\citep{noothigattu2020axioms,conitzer2024social,ge2024axioms}. We argue that Axiomatic Choice provides a more appropriate framework, and can model values that cannot easily be represented or differentiated by Social Choice.

\section{Model}

\paragraph{Decisions}
Let $\mathcal{X}$ and $\mathcal{Y}$ be non-empty sets.
Let $\mathcal{F}$ be exactly the set of all distinct, deterministic functions $f$ such that $f: \mathcal{X} \rightarrow \mathcal{Y}$.
Thus, $\mathcal{X}$ is the domain of all functions $f \in \mathcal{F}$, and for every pair of functions $f,f' \in \mathcal{F}$, $\exists x \in \mathcal{X}$ such that $f(x) \neq f'(x)$.\footnote{We define our rules to be deterministic, but the elements of $\mathcal{X}$ can contain random bits, thus encapsulating randomized rules.}

We denote by $\mathcal{D} = \{(x, f, y) : f(x) = y, f \in \mathcal{F}, x \in \mathcal{X}, y \in \mathcal{Y}\}$ the set of all \emph{decisions} with respect to $\mathcal{X}$ and $\mathcal{Y}$, where a decision corresponds to the application of a function $f$ to an element in its domain $x \in \mathcal{X}$ to produce an element of its codomain $y \in \mathcal{Y}$. If $f(x) \neq y$, then $(x, f, y) \not \in \mathcal{D}$ and $(x, f, y)$ is not a decision.

These are our initial building blocks for modeling axiomatic choice. We refer to the elements of $\mathcal{X}$ as profiles, the elements of $\mathcal{Y}$ as outcomes, the functions in $\mathcal{F}$ as rules, and the tuples in $\mathcal{D}$ as decisions. We can use a rule to make a decision, which consists in taking in a profile and producing an outcome.
Notice that the two variables $(x, f)$ are enough to uniquely determine $y = f(x)$, so the tuple $(x, f, y)$ contains redundant information. We do this for clarity.

\begin{figure}[h!]
\begin{center}
\scalebox{1}{
\begin{tikzpicture}
  \node[circle,draw] (outcome){outcome} [grow'=up]
    child {node[circle,draw] (profile){profile}}
    child {node[circle,draw] (rule){rule}
    };
    \draw[dashed,->] (profile) -- (outcome);
    \draw[dashed,->] (rule) -- (outcome);
\end{tikzpicture}
}
\end{center}
\caption{Components of a decision}
\end{figure}
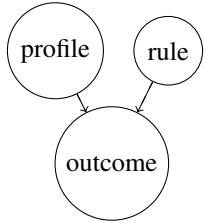

\paragraph{Axioms}
An axiom $A$ induces a partition $P_A:\mathcal{D} \rightarrow \{0,1\}$ on a set of decisions $\mathcal{D}$, splitting it into those decisions that \emph{violate} the axiom (0) and a complementary set that \emph{obey} the axiom (1). We abuse notation slightly and use functional notation where $A(x,f,y) = 1$ means decision $(x,f,y)$ obeys $A$, while $A(x,f,y)=0$ means the axiom is violated. Any axiom can be uniquely characterized by the set of all decisions $L \subseteq \D$ that obey it.

Axioms are used to make and evaluate decisions. We say that we \textit{evaluate a decision} when we check whether it obeys a given axiom. When used to \textit{make decisions}, an axiom restricts the set of allowed rules and possible outcomes given any profile. We will refer to the following running example throughout the paper. 

\begin{example}[Running Example]\label{ex:running}
Let $\mathcal{X} = \{x_1, x_2\}$, $\mathcal{Y} = \{y_1, y_2\}$, and $\mathcal{F}$ be defined as follows:

$$\mathcal{F} =\begin{cases} 
    f_1 := f_1(x_1) = y_1, & f_1(x_2) = y_1 \\
    f_2 := f_2(x_1) = y_1, & f_2(x_2) = y_2 \\
    f_3 := f_3(x_1) = y_2, & f_3(x_2) = y_1 \\
    f_4 := f_4(x_1) = y_2, & f_4(x_2) = y_2 \\
\end{cases}$$

$$L =\begin{cases}
    (x_1, f_1, y_1) \\
    (x_2, f_4, y_2) \\
\end{cases}$$
\end{example}


\paragraph{Generality}
Across all of Social Choice, $\X$ is related to $\Y$ in some way, e.g., $\Y$ may be a set of candidates from which a single winner must be chosen and $\X$ may be the set of possible orderings over the candidates in $\Y$.
By contrast, we impose no particular relationship between the elements of $\mathcal{X}$ and elements of $\mathcal{Y}$. 
In fact, we specify nothing about the elements of $\mathcal{X}$ or $\mathcal{Y}$ whatsoever. They can be any sort of objects: sets, functions, binary strings, natural language statements, partial orders, directed graphs, probability distributions, grandmothers, etc.
Our model is entirely domain-agnostic and applies equally well to individual and collective decisions.

\subsection{Rules vs. Profile-Outcome Pairs}\label{sec:rules_vs_lists}
It is informative to begin by illustrating the difference between two simple axioms.
Let $B \subseteq \X \times \Y$ be a set of profile-outcome pairs that includes a single pair for every profile $x \in \mathcal{X}$. In other words, $B$ maps each profile to a single outcome.
The set $B$ can be used to define a corresponding rule $f_B: \mathcal{X} \rightarrow \mathcal{Y}$ where $f_B(x) = y$ if and only if $(x,y) \in B$. Notice that for any such set $B$ we can find a corresponding rule $f_B$, and that for any rule $f_B \in \mathcal{F}$ we can define the corresponding set $B$. Given a profile $x$, if we make decisions by looking up the corresponding entry in $B$ and outputting the outcome $y$, then we are implicitly implementing the rule $f_B$.

Now compare the following two axioms. Define axiom $A_B$ such that $A_B(x,f,y)=1$ if and only if $(x,y) \in B$, and define axiom $A_F$ such that $A_F(x, f, y) = 1$ if and only if $f = f_B$. 
Clearly, if we use $f_B$ to make any decision $(x, f_B, f_B(x))$ then this decision must obey both $A_B$ and $A_F$. However, $A_B$ and $A_F$ are not the same axiom! 
Let $(x,g,y)$ be a decision such that $(x,y) \in B$ but $g \neq f_B$. The decision $(x,g,y)$ will obey $A_B$ because $(x,y) \in B$, but it will violate $A_F$ because $g \neq f_B$.
$A_B$ and $A_F$ induce different partitions of the set of decisions $\mathcal{D} = \mathcal{X} \times \mathcal{F} \times \mathcal{Y}$. The set of decisions that obey $A_F$ is a subset of those that obey $A_B$.

Using Example~\ref{ex:running}, if $B = \{(x_1,y_1),(x_2,y_2)\}$ then $f_B = f_2$, and the decision $(x_1, f_1, y_1)$ will obey axiom $A_B$ but not $A_F$ while $(x_1, f_2, y_1)$ and $(x_2, f_2, y_2)$ will obey both axioms.


As a concrete example, consider the difference between ``The rule must be Pareto-efficient" and ``The outcome must be Pareto-optimal." When these axioms are used to evaluate a decision they can arrive at different results, because rules may select Pareto-efficient outcomes for some but not all profiles. But when used to make decisions, both axioms enforce a Pareto-optimal outcome for every profile. 

\subsection{Contributions}
Using our domain-agnostic framework of Axiomatic Choice we demonstrate the fundamental challenges of using normative statements encoded as formal axioms to make decisions and evaluate decisions. We (1) develop a new taxonomy of axioms based on their structural properties, (2) formalize the concepts of transparency and deception central to explaining and justifying decisions, (3) reveal how the Decision-Evaluation Paradox creates a tension between the use of axioms when making and evaluating decisions, (4) provide a new minimal, domain-agnostic, language-independent definition of intraprofile axioms using our constructs, and (5) show how Decision-Evaluation paradox and deception limit axiom-based decision making procedures like those of \cite{schmidtlein2023voting} and \cite{dietrich2005reach}.

\section{Axiom Taxonomy}\label{sec_taxonomy}
We introduce a classification of axioms based on how they partition the set of possible decisions as shown in Table~\ref{tab_set_characterization}.

\begin{table}[h!]
\begin{center}
\small
\begin{tabular}{|p{2.28cm}|p{5.72cm}|}
\hline
\textbf{Class} & \textbf{Definition} \\
\hline
Positively Trivial & $\forall (x,f,y) \in \D$ : $A(x,f,y)=1$ \\
\hline
Negatively Trivial & $\forall (x,f,y) \in \D$ : $A(x,f,y)=0$ \\
\hline
Structural & $\exists X \subseteq \X$ : $A(x,f,y)=1 \Leftrightarrow x \in X$ \\
\hline
Procedural & $\exists F \subseteq \F$ : $A(x,f,y)=1 \Leftrightarrow f \in F$ \\
\hline
Consequentialist & $\exists Y \subseteq \mathcal{Y}$ : $A(x,f,y)=1 \Leftrightarrow y \in Y$ \\
\hline
Blackbox & $\exists B \subseteq \mathcal{X} \times \mathcal{Y}$ : $A(x,f,y)=1 \Leftrightarrow (x,y) \in B$ \\
\hline
Caudal & $\exists C \subseteq \mathcal{F} \times \mathcal{Y}$ : $A(x,f,y)=1 \Leftrightarrow (f,y) \in C$ \\
\hline
Exigent & None of the Above \\
\hline
\end{tabular}
\end{center}
\caption{Classifying axioms by partition structure}
\label{tab_set_characterization}
\end{table}

An axiom $A$ is \textit{trivial} with respect to decision domain $\mathcal{D}$ if it is obeyed by all decisions in $\mathcal{D}$ or violated by all decisions in $\mathcal{D}$. 
%
%
We further divide trivial axioms into positively trivial ones that are obeyed by every decision, and negative trivial axioms that are violated by every decision. 
The axiom of ``Universal Domain" in Arrow's Impossibility Theorem is a prime example of a positively trivial axiom. Universal Domain says that $f(x) \in \Y$ for all $x \in \X$, which is implied by our definition of a rule and construction of $\F$. 
%

Structural, procedural, and consequentialist axioms are the three most basic types of non-trivial axioms. 
%
Structural axioms characterize properties of profiles.\footnote{The term ``structural" is taken from \citet{fishburn2015theory}.} For example, a quorum requirement for passing legislation is a structural axiom and so is single-peakedness.
Procedural axioms characterize properties of rules, e.g., whether the rule is Pareto-efficient, strategyproof, or can be computed efficiently.
Consequentialist axioms characterize properties of outcomes without respect to the profile or rule, e.g., whether a winning candidate belongs to a particular political party. A person whose decision-making preferences are characterized by a consequentialist axiom cares only about the outcome and not how the decision gets made.

Blackbox and caudal axioms partition $\D$ with respect to two components of the decision.
Blackbox axioms can be expressed as sets of profile-outcome $(x,y)$ pairs.
For example, the statement ``The outcome must be Pareto-optimal" is a blackbox axiom, which differs from the procedural axiom, ``The rule must be Pareto-efficient."
Structural and consequentialist axioms are edge cases of blackbox axioms.
%
When it comes to notions of fairness and efficiency, ex-ante notions (e.g., anonymity) are typically procedural and ex-post notions are typically consequentialist or blackbox (e.g., equity or envy-freeness).

A caudal axiom depends on the rule and/or outcome but not directly on the profile, e.g., ``We should use Plurality rule to vote on the best paper at the conference, unless my paper wins, in which case the rule doesn't matter." Lastly, an axiom is exigent if it is not any of the other types listed above.

%
%
We do not make any judgments here about what axioms are good or desirable or useful, but argue only that for each of our taxonomic classes there agents in the real world whose normative positions on some decisions belong to that class.

Finally, notice that using our taxonomy we can characterize any non-trivial non-exigent axiom by a single set of decision components $(X,F,Y,B, \text{ or } C)$, rather than the full set of decisions $L \subseteq \D$ that obey it. We will use both characterizations as convenient.

\section{Axiom Properties and Relations}
We now move to defining the various properties and relations of axioms that we need to develop our core results. We say that an axiom $A$ is \textit{implementable} if it does not allow an impasse, so for all profiles there is at least one decision that obeys the axiom.

\begin{definition}[Impasse]\label{def:impossible2}
    Given axiom $A$, an impasse is a profile $x \in \mathcal{X}$ such that $A(x, f, f(x)) = 0$ for all $f \in \mathcal{F}$.
\end{definition}

Expanding the definitions from \citet{schmidtlein2023voting}, we say that an axiom $A$ is \textit{forcing with respect to profile $x$} if there exists a single outcome $\textbf{y} \in \mathcal{Y}$ such that $A(x,f,y)=1$ if and only if $y=\textbf{y}$. We will further say that $A$ forces the outcome $\textbf{y}$ on $x$, and that the axiom itself is \textit{forcing} if it is forcing with respect to every profile in $\mathcal{X}$.

\begin{definition}[Forcing]
    Axiom $A$ is forcing if for all profiles $x \in \X$ there is a single outcome $\textbf{y} \in \Y$ such that $A(x,f,y)=1$ only if $y=\textbf{y}$. 
\end{definition}

We can use forcing axioms to make decisions because they imply a specific outcome for every profile and do not allow an impasse. A forcing axiom corresponds to a particular rule, just like $B$ and $f_B$ in Section~\ref{sec:rules_vs_lists}. We aim to clarify this relationship between forcing axioms and rules here.
Given the set of decisions $L \subseteq \D$ characterizing any axiom we can derive a blackbox axiom by reduction.

\begin{definition}[Blackbox Reduction]
    Given $L \subseteq \D$ characterizing axiom $A_L$, let $B = \{(x,y) : \exists f \in \mathcal{F} \text{ such that } (x,f,y) \in L\}$. We say that axiom $A_B$ characterized by $B$ is the blackbox reduction of $A_L$.
\end{definition}

\begin{definition}[Extensional Equivalence]
    Two axioms are extensionally equivalent if they reduce to the same blackbox axiom.
\end{definition}

Many axioms may reduce to the same blackbox axiom, but every axiom has a single reduction. From any blackbox axiom we can construct a unique procedural axiom by \textit{extension}.

\begin{definition}[Procedural Extension]
    Let $A$ be any axiom that reduces to the blackbox axiom $A_B$ characterized by $B \subseteq \X \times \Y$, and let $F = \{f \in \F : \forall x \in \X, (x,f(x)) \in B\}$. We call the procedural axiom characterized by $F$ the procedural extension of $A$.
\end{definition}

Every blackbox axiom has a single procedural extension, but all blackbox axioms that allow an impasse their extension is negatively trivial. For implementable blackbox axioms, procedural extension generalizes the ``interpretation of outcome statements" from \cite{boixel2022calculus}.

Finally, before introducing our main theoretical results, we must bring extensional equivalence and forcing together to define the concept of an implied rule, which generalizes the ``induced voting rule" of \cite{schmidtlein2023voting}.

\begin{definition}[Implied Rule]
    Axiom $A$ implies rule $f$ if $A$ is extensionally equivalent to the procedural axiom characterized by $F = \{f\}$.
\end{definition}

We now have the basic tools necessary to examine the role of axioms in making and evaluating decisions. We begin with a few basic observations, omitting proofs for brevity, before moving to our main results.

\begin{observation}
    The blackbox reduction and procedural extension of an axiom are extensionally equivalent to the original axiom.
\end{observation}

\begin{observation}
    If an axiom is forcing then so is every axiom extensionally equivalent to it.
\end{observation}

\begin{observation}
    The procedural extension of an axiom is characterized by a single rule $F = \{f\}$ if and only if the axiom is forcing, so all forcing axioms imply a rule.
\end{observation}

\section{The Decision-Evaluation Paradox}\label{sec_paradox}
We can commit to obeying a forcing axiom when making a decision before ever observing a profile, knowing that it will lead us to a unique outcome. The process of making a decision guided by the forcing axiom implements the axiom's implied rule. Counterintuitively, if we take the rule implied by a forcing axiom and use it to make decisions, those decisions are not guaranteed to obey the axiom. We call this the Decision-Evaluation Paradox. Below is a minimal example proving the existence of the paradox.

\begin{theorem}\label{thm_implied_rule_not_obey}
    Making a decision using the implied rule of a forcing axiom does not guarantee that the decision obeys the axiom.
\end{theorem}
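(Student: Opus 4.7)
My plan is to prove this by exhibiting a small counterexample with a forcing caudal axiom whose implied rule is explicitly forbidden by the axiom itself. The key intuition is that the process of reducing an axiom to its blackbox form, then extending back to a procedural axiom, discards any dependence on $f$ beyond what is needed to produce $y$; so a caudal or exigent forcing axiom can force the outcome $y$ while still penalizing the particular rule that would be synthesized as its implied rule.

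Concretely, I would take $\X = \{x_1, x_2\}$ and $\Y = \{y_1, y_2\}$, so that $\F$ consists of the four total functions $\{f_{11}, f_{12}, f_{21}, f_{22}\}$ where $f_{ij}(x_1) = y_i$ and $f_{ij}(x_2) = y_j$. Then I would define the caudal axiom
\[
A(x,f,y) = 1 \iff y = y_1 \text{ and } f \neq f_{11}.
\]
First I would verify $A$ is forcing: for each profile, the only outcome compatible with $A$ is $y_1$. Next I would check implementability, which is where the construction has to be done carefully: I need a rule $f$ with $f(x_1) = y_1$ and $f \neq f_{11}$ (namely $f_{12}$, giving the obeying decision $(x_1,f_{12},y_1)$) and a rule $f$ with $f(x_2) = y_1$ and $f \neq f_{11}$ (namely $f_{21}$, giving $(x_2,f_{21},y_1)$), so no impasse occurs.

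Then I would compute the implied rule of $A$. Running the reduction of Algorithm \ref{alg_reduction_blackbox} on the obeying decisions yields the blackbox axiom characterized by $B = \{(x_1,y_1),(x_2,y_1)\}$. By Observation \ref{obs_forcing_requirements} this blackbox axiom is forcing, and by Theorem \ref{thm_forcing_blackbox_extension} its procedural extension consists of the unique rule that maps each profile to its forced outcome, which is $f_{11}$. Hence the rule implied by $A$ is $f_{11}$.

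Finally, applying $f_{11}$ to, say, $x_1$ produces the decision $(x_1, f_{11}, y_1)$. Evaluating, $A(x_1, f_{11}, y_1) = 0$ because the second conjunct $f \neq f_{11}$ fails, so the decision violates $A$ despite $f_{11}$ being $A$'s implied rule. The main obstacle I anticipate is making sure implementability survives the act of excluding the would-be implied rule; the two-profile, two-outcome setup is the minimal one where enough alternative rules agree on the forced outcomes at every profile to sidestep an impasse, which is why I resist simplifying further to $|\X| = 1$.
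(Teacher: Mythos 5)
Your proof is correct and takes essentially the same approach as the paper: a minimal counterexample on $|\X|=|\Y|=2$ in which a forcing axiom is nevertheless an Arrovian impossibility theorem, so its implied rule (obtained via blackbox reduction and procedural extension) never appears in any obeying decision. The only cosmetic difference is that your axiom is caudal and directly forbids the implied rule $f_{11}$, whereas the paper's Example~\ref{ex:paradox} uses an exigent axiom that permits a different specific rule for each profile; both instantiate the same argument, and your verification of forcing, implementability, and the reduction/extension chain is sound.
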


\begin{example}[Decision-Evaluation Paradox]\label{ex:paradox}
Recall Example~\ref{ex:running} with $L = \{(x_1,f_1,y_1),(x_2,f_4,y_2)\}$, and let $A_L$ be the forcing exigent axiom characterized by $L$. The reduction of $A_L$ yields $A_B$ characterized by $B = \{(x_1, y_1),(x_2,y_2)\}$. Axiom $A_B$ implies the rule $f_2$ because $f_2(x_1) = y_1$ and $f_2(x_2) = y_2$, and therefore $A_L$ implies $f_2$ as well. However, no decision made using $f_2$ will ever obey $A_L$, even though it obeys $A_B$.
\end{example}

The Decision-Evaluation Paradox shows us that using axioms to make decisions does not guarantee that our decisions will obey those axioms. However, this paradox only arises for caudal and exigent axioms.

\begin{theorem}\label{prop_forcing_BB_or_F}
    Making decisions using the implied rule of a forcing blackbox axiom or forcing procedural axiom guarantees that the decision obeys the axiom.
\end{theorem}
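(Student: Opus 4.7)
The plan is to handle the two cases (forcing blackbox axiom and forcing procedural axiom) separately, in each case using the structural characterization given in Observation~\ref{obs_forcing_requirements} to pin down the implied rule explicitly, and then verifying directly that any decision produced by that rule satisfies the membership condition defining the axiom.

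First I would consider a forcing blackbox axiom $A_B$ characterized by $B \subseteq \X \times \Y$. By Observation~\ref{obs_forcing_requirements}, $B$ contains exactly one pair $(x, y_x)$ for each profile $x \in \X$. By Theorem~\ref{thm_forcing_blackbox_extension}, the procedural extension of $A_B$ is characterized by a single rule $f$, and by construction of the procedural extension, $f$ is the unique rule satisfying $f(x) = y_x$ for all $x$, so $f$ is the implied rule of $A_B$. Then for any profile $x$, the decision $(x, f, f(x)) = (x, f, y_x)$ satisfies $(x, f(x)) \in B$, hence $A_B(x, f, f(x)) = 1$.

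Next I would consider a forcing procedural axiom $A_F$. By Observation~\ref{obs_forcing_requirements}, $A_F$ is characterized by a single rule; call it $f$, so $F = \{f\}$. Then $A_F$ is its own procedural extension, and hence $f$ is its implied rule. For any profile $x$, the decision $(x, f, f(x))$ uses a rule in $F$, so $A_F(x, f, f(x)) = 1$ by definition.

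There is no real obstacle here: both directions are essentially immediate unpackings of the definitions of the implied rule, forcing, and the respective axiom classes, once the single-rule/single-outcome-per-profile characterization from Observation~\ref{obs_forcing_requirements} is in hand. The only thing to be careful about is to identify the implied rule correctly in each case before checking the membership condition, since the definition of ``implied rule'' routes through extensional equivalence rather than being stated directly.
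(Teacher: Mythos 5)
Your proposal is correct and follows essentially the same route as the paper's own proof: identify the implied rule as the single rule characterizing the forcing procedural axiom (respectively, the single rule in the procedural extension of the forcing blackbox axiom) and then check membership directly. Your version is merely more explicit about why that rule is indeed the implied rule in the sense of extensional equivalence, which is a harmless elaboration rather than a different argument.
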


\begin{proof}
    Every forcing blackbox axiom $A_B$ has a unique procedural extension which is a single rule that is its implied rule $f_B$, and therefore $f_B(x) = y \Rightarrow A_B(x,f_B,y)=1$. For any forcing procedural axiom $A$, there is a single rule $\textbf{f}$ such that $A(x,f,y)=1$ if and only if $f = \textbf{f}$, which must also be $A$'s implied rule.
\end{proof}

While the paradox does not appear with blackbox or procedural axioms, that does not mean we can ignore it. For example, if we care about both ex-ante and ex-post fairness criteria, then this combination may be exigent. Generally, combining blackbox with procedural axioms can yield caudal and exigent axioms, and we will clarify what we mean by combining axioms in Section~\ref{sec:set_operations}.
It remains to show precisely when this paradox arises for caudal and exigent axioms. To this end, we need the classical notion of an Arrovian impossibility from Social Choice.

\begin{definition}[Arrovian Impossibility]\label{def:impossible1}
    Axiom $A$ is an Arrovian impossibility if $\nexists f \in \mathcal{F}$ such that for all $x \in \mathcal{X}$, $A(x,f,f(x))=1$.
\end{definition}
 
Example \ref{ex:paradox} shows how the paradox arises when an axiom is forcing but is still an Arrovian impossibility. The paradox occurs whenever an Arrovian impossibility is implementable, so it does not allow an impasse (but does not need to be forcing). In this case none of the rules in the set $F$ that characterizes its procedural extension can make decisions that obey the axiom for all profiles. The decision-evaluation paradox is of particular concern when defining procedures that explicitly combine multiple axioms to make decisions, like Voting by Axioms.

\subsection{Voting by Axioms}\label{sec:voting_by_axioms}
\cite{schmidtlein2023voting} proposed a process called Voting by Axioms for combining procedural axioms to make decisions. Here we generalize it to all possible axioms. Construct an ordered set of axioms $\vec{A} = (A_1, \ldots, A_n)$. For each axiom $i \leq n$, let $X_i \subseteq \X$ be the subset of profiles with respect to which $A_i$ is forcing and for which $A_j$ is not forcing for any $j < i$. Given any profile $x$ such that $x \in X_i$, select the outcome forced by $A_i$. If $\cup_i X_i = \X$, then the Voting by Axioms procedure selects a unique outcome for every profile, and this can be guaranteed by requiring $A_n$ be forcing so that $X_n = \mathcal{X}$.

In our framework, Voting by Axioms can be seen as one method among many to combine multiple axioms into a single forcing axiom using set operations on their partitions of $\D$, and hence a unique implied rule. Let $\tilde{L}_i  = \{(x,f,y) \in \D : x \in X_i \text{ and } A_i(x,f,y) = 1\}$, let $\tilde{L} = \cup_i \tilde{L}_i$, and let $A_L$ be the forcing axiom characterized by $\tilde{L}$.  $A_L$ could belong to any of our taxonomic classes, and is therefore vulnerable to the decision-evaluation paradox, which arises exactly when $A_L$ is an Arrovian impossibility but does not allow an impasse.

If every axiom in $\vec{A}$ is blackbox, then $A_L$ is a forcing blackbox axiom and the decision-evaluation paradox cannot occur. By contrast, if the axioms in $\vec{A}$ are all procedural, $A_L$ is not guaranteed to be procedural, and the paradox can arise.
For example suppose $\vec{A} = (A_1,A_2)$ where the axioms are both procedural and characterized by $F_1 = \{f_1,f_2\}$ and $F_2 = \{f_3\}$ using the rules defined in Example~\ref{ex:running}. Then $\tilde{L} = \{(x_1,f_1,y_1),(x_1,f_2,y_1),(x_2,f_3,y_1)\}$ and $A_L$ is a forcing exigent axiom with the implied rule $f_1$ though making decisions with $f_1$ on profile $x_2$ will violate $A_L$. By contrast, if instead $F_2 = \{f_1, f_3\}$, then $A_L$ is still an exigent forcing axiom, but the decision-evaluation paradox does not occur.


\section{Set Operations}\label{sec:set_operations}
Axioms can be created from other axioms using set operations on their partitions of $\D$.
Given any set of axioms $\bar{A} = \{A_1, A_2, \ldots\}$ we can define $L_i = \{(x,f,y) \in \D : A_i(x,f,y)=1\}$ and construct $L = \cap_i L_i$ to characterize $A_L$. We will denote this by $A_L = AND(\bar{A})$. When none of the axioms $A_i$ are negatively trivial but $A_L$ is negatively trivial, this constitutes an Arrovian impossibility theorem.
Existing approaches to justifying decisions using axioms often consist of constructing an axiom $A_L$ from a set of procedural axioms $\bar{A}$ in this way and then demonstrating that $A_L$ is forcing for a given profile~\citep{boixel2020automated,boixel2022calculus,nardi2022graph}.

Axioms may be constructed from set operations in other ways. For example, we can use $L = \cup_i L_i$ to characterize $A_L$, where a decision obeys $A_L$ if it obeys any of the axioms in $A_i$, denoted by $A_L = OR(\bar{A})$. This may be useful when an agent may accept multiple different types of justification, e.g., ``Either the rule must be ex-ante fair or the outcome must be ex-post fair." These constructions make it easier to capture what it means to be reasonable, flexible, or accommodating when making and evaluating decisions.
 Voting by Axioms in Section~\ref{sec:voting_by_axioms} uses yet another construction for creating forcing axioms using set operations.

 \begin{proposition}[Taxonomic Closure]\label{prop:closure}
     If $\bar{A}$ is blackbox (resp. caudal) then $AND(\bar{A})$ and $OR(\bar{A})$ are blackbox (resp. caudal), but may be trivial.
 \end{proposition}

Proposition~\ref{prop:closure} is easily proven by examining set operations on the characterizing sets $B$ and $C$ rather than the full set of obeying decisions $L$; e.g., for all $B_1,B_2 \subseteq \X \times \Y$, $B_1 \cap B_2 \subseteq \X \times \Y$ and $B_1 \cup B_2 \subseteq \X \times \Y$.
 
The ability to construct complex axioms from sets of multiple simpler axioms is important for the elicitation of agents' normative positions, because complex axioms may be hard or unnatural to describe directly. For example, it is much more natural to think of Arrow's General Impossibility Theorem in terms of multiple non-trivial axioms rather than a single negatively trivial axiom.

Whatever means we use to combine axioms there is the risk that their combination allows an impasse or creates an Arrovian impossibility. In these cases, we might try to make decisions that obey the maximum number of axioms in $\bar{A}$, which we will revisit in Section~\ref{sec:max_acceptance}.

\section{Transparency and Deception}

We can define general concepts of transparency and deception from the perspective of an agent evaluating a decision. We assume that an agent's normative position is characterized by an axiom, and they ``accept" a decision if it obeys their axiom.

\begin{definition}[Transparency]
    A decision is transparent to an agent whose normative position is described by axiom $A$ if the agent has access to sufficient information about the decision to determine if it obeys or violates $A$.
\end{definition}

Our axiom taxonomy breaks up axioms according to which decision components ($x$, $f$, or $y$) are sufficient to evaluate an axiom $A$; for a procedural axiom $f$ is sufficient to determine $A(\_\_, f, \_\_)$, for a blackbox axiom $(x,y)$ is sufficient to determine $A(x, \_\_, y)$, for an exigent axiom all three components are needed, etc. Thus the class to which an axiom belongs tells us the kind of information that makes a decision transparent to an agent. A lack of transparency creates room for deception, which may be material or immaterial.

\begin{definition}[Deception]
    A statement about a decision $(x,f,y)$ is deceptive if the statement contains information implying that the decision is $(x',f',y') \neq (x,f,y)$. To an agent whose normative position is axiom $A$, this deception is said to be material to the agent if $A(x,f,y) \neq A(x',f',y')$, and immaterial otherwise.
\end{definition}

It is common for $x$ and $y$ to be observable but not the rule $f$, because one cannot be sure what would have happened if the profile were different unless the implementation of the rule is auditable. Any statement that implies a different rule was used contains false counterfactual implications about what would have happened had the profile been different, and creates deception. If the agent is told that a decision was $(x,f',y)$ when in reality it was $(x, f, y)$, then the agent is deceived. This deception will be immaterial if the agent's axiom is blackbox, but may be material otherwise. We refer to this as \textit{counterfactual deception}, and observe that failure to recognize the decision-evaluation paradox leads to counterfactual deception.

Consider the following exigent forcing axiom whose extension is characterized by Black's Rule: ``If a Condorcet-winner exists then use the Copeland Rule and otherwise use the Borda Rule". If we make decisions according to this axiom, then we are never truly using the Borda Rule or the Copeland Rule; we are using an implied rule that selects the Copeland winner on a subset of profiles and the Borda winner on others. Now suppose there is a stakeholder who is adamant that the decision should be made by the Borda Rule. Their normative position is a procedural axiom. For a given decision in which the winner is the Borda winner and there is no Condorcet-winner, will that participant be satisfied? If you explain to them that they should be satisfied because the ``Borda Rule" was used to select the winner, is that true or are you misleading them? If you are misleading them is it material? This explanation does materially deceive them because their axiom is procedural and allows only the Borda Rule on all profiles. If their axiom were a blackbox axiom that only required the Borda winner to win, then the same statement would not be deceptive, because the agent only needs to know the profile and outcome to evaluate the decision. The issue of deception arises when we try to provide a single explanation or justification for a decision, and comes into sharp relief when the agents are provided individualized explanations for the same decision.

\subsection{Maximizing Acceptance}\label{sec:max_acceptance}
Suppose we have a list of axioms $A^* = (A_1, \ldots, A_n)$, and we wish to make decisions that satisfy as many of the axioms as possible. For example, the axioms may encode the normative positions of $n$ agents involved in a collective decision who each see a decision as acceptable if only if the decision obeys their personal axiom. We want to make decisions that are acceptable to the maximum number of agents. This problem arises in the design of decision-making processes, including automated mechanism design~\citep{sandholm2003automated}, the founding and amendment of constitutions~\citep{abramowitz2021deciding}, and Constitutional AI~\citep{bai2022constitutional}.

If all axioms are procedural, they can be characterized by sets of rules $(F_1, \ldots, F_n) \subseteq \mathcal{F}^n$. To maximize acceptance, we make decisions by the rule: $f^* = \argmax\limits_{f \in \F} \sum\limits_{i\leq n} |\{f \cap F_i\}|$.\footnote{Ties broken arbitrarily} We are able to define the rule $f^*$ that will make decisions to maximize acceptance for any given profile, before ever observing a profile.

If all axioms are blackbox, we can characterize them by $(B_1, \ldots, B_n) \subseteq (\X \times \Y)^n$. For all profiles $x \in \X$, define $y_x = \argmax\limits_{y \in \Y} \sum\limits_{i \leq n} |\{(x,y) \cap B_i\}|$.\textsuperscript{3} Now define $f^*$ such that $f^*(x) = y_x$ for all $x \in \X$. Again, we are able to define a rule $f^*$ for making decisions that will maximize acceptance for any given profile, before observing a profile.

What we want is a mechanism that takes in any set of axioms and constructs a rule to makes decisions that maximize acceptance on all profiles. However, if the axioms can be caudal, exigent, or a mix of blackbox and procedural, then such a rule may not exist. Recall the definitions in Example~\ref{ex:running}. Suppose we have three agents with one blackbox axiom and two procedural axioms characterized by $B_1 = \{(x_1,y_1), (x_2,y_2)\}$, $F_2 = \{f_1\}$, and $F_3 = \{f_4\}$. Now there is no single rule that maximizes acceptance for all profiles.

It is tempting then to construct a mechanism that takes in $A^*$ and a given profile $x$, and then computes an acceptance-maximizing decision for $x$. Let $\mathcal{A}$ be the set of all possible axioms and define the mechanism $M: \mathcal{A}^n \times \X \rightarrow \F$ that selects a single rule for every profile such that $f_x = M(A^*, x) := \argmax\limits_{f \in \F} \sum_{i \leq n} A_i(x,f,f(x))$.\textsuperscript{3} Here we run into counterfactual deception. Telling the agents that the decision is $(x, f_x, f_x(x))$ is a form of deception, because for profile $x'$ the decision would not be $(x', f_x, f_x(x'))$.

Consider instead the rule $f^*$ such that $f^*(x) = f_x(x)$ for all profiles, and suppose agents are told that the decision is $(x, f^*, f^*(x))$.
Unfortunately, a decision $(x, f^*, f^*(x))$ may not obey many of the axioms in $A^*$. Consider again $A^*$ characterized by $(B_1, F_2, F_3)$ above. The rule $f^*$ would be $f_2$ from Example~\ref{ex:running}, and would only ever make decisions accepted by one agent. If we were to use $f_1$ instead, then decisions will sometimes be accepted by two agents and otherwise be accepted by one, which is a Pareto improvement over using $f_2$ in terms of acceptance.

\cite{dietrich2005reach} proposes using the mechanism $M$ in a special case where each agent reports a single rule $f$, but it is assumed that the agents' normative positions are captured by forcing blackbox axioms being reported as implied rules $f_i$, such that $B_i = \{(x,y) : f_i(x)=y\}$. With only blackbox axioms there is no possibility of counterfactual deception, but if the agents' normative positions can belong to any other class of non-trivial axioms, then the acceptance maximizing procedure becomes vulnerable.

\subsection{Conflicting Explanations}
Part of the promise of automatically generating explanations and justifications for decisions is that different evaluations of a single decision can be constructed for individual agents based on their personal axioms, but this can lead to conflicting explanations. Again we will use the definitions in our running example, Example~\ref{ex:running}. Suppose that one agent's axiom is the exigent forcing axiom $A_L$ characterized by $L$ given in the example, while a second agent's axiom is the forcing procedural axiom characterized by $F = \{f_2\}$. Once a decision is made, we tell the first agent that $A_L$ was obeyed when making the decision, and tell the second agent that $f_2$ was used to make the decision. If the true decision was, say, $(x_1, f_1, y_1)$ then the second agent is deceived, but if the true decision was $(x_1, f_2, y_1)$ then the first agent is deceived. The counterfactual deception is not revealed to the agents if they can observe only $x$ and $y$. However, if agents can audit the rule, view the explanations given to other agents, or are provided a sequence of explanations that must be consistent across multiple decisions, then deception may be revealed. A more complex example of counterfactual deception from individualized axiomatic explanations can be found in~\cite{abramowitz2021deciding} using the axioms of Arrow's general impossibility theorem.

\subsection{Exigent Axiom Evaluation}
By examining what information is sufficient for evaluation, we reveal that there are two types of exigent axioms that cannot be easily disambiguated when looking only at partition sets. For some exigent axioms, knowing $x$ and $f$ is sufficient for evaluation without needing to compute $f(x)$, while for others we need to know $f(x)$ to evaluate whether a decision obeys the axiom. Consider the statement,``If Carl will be voting then we should use unanimity rule, otherwise we should use majority rule." To determine if a decision obeys the axiomatization of this statement, we need to know $f$ and whether $x$ contains Carl's vote, but do not need to  compute $f(x)$. This contrasts with exigent axioms that also depend on some knowledge of the outcome $y$. For example, ``We should use the Borda rule to vote on the best paper at the conference, unless I submit a paper and my paper loses under the Borda rule but would win under Plurality rule, in which case we should vote by Plurality."




\section{Intraprofile Axioms}\label{sec_intraprofile}
\citet[Chapter~14]{fishburn2015theory} defines intraprofile and interprofile axioms as part of a taxonomy of procedural axioms defined in a specific logical language. The common usage is that ``Intraprofile axioms have a particularly simple structure in that they only speak about conditions on outcomes `one profile at a time' "- \cite{schmidtlein2023voting} while \textit{interprofile} axioms require consideration of multiple profiles.

Consider the statement, ``The decision should be made using Plurality Rule." Computing the plurality winner(s) only requires knowledge of the given profile. To verify that the outcome of a decision is the plurality winner only requires one to know the profile and outcome. However, to tell whether the Plurality Rule was used to make the decision requires reasoning about what would have happened under every other possible profile, which is part of the definition of Plurality Rule. So how can a procedural axiom be intraprofile?

We can see that consideration of multiple profiles differs between making versus evaluating decisions, and any definition of intraprofile or interprofile procedural axioms must contend with the fact that rules are defined with respect to all possible profiles. A related problem was encountered by prior work that aimed to define an ``instance of an axiom"~\citep{schmidtlein2022voting}, but when axioms are seen as properties of decisions rather than properties of rules, the problem disappears because an ``instance" is just a decision. \cite{schmidtlein2023voting} offers a language-independent domain-specific definition of intraprofile axioms, which is a little roundabout. We extend their definition to be domain-agnostic below.

\begin{definition}[Intraprofile Axioms from \citeauthor{schmidtlein2023voting}]
    Procedural axiom $A$ is intraprofile if the procedural extension of its reduction is the original axiom $A$.
\end{definition}

Unlike evaluating procedural axioms, evaluating blackbox axioms requires only the singular profile and outcome of a decision.
Here, we offer a simpler, equivalent definition of intraprofile axioms based on their relationship to blackbox axioms. This is possible because every axiom has only a single reduction and every blackbox axiom has a single procedural extension.

\begin{definition}[Intraprofile Axioms]\label{def:intraprofile_procedural}
    An axiom is intraprofile if and only if it is a procedural extension, i.e., $\exists B \subseteq \mathcal{X} \times \mathcal{Y}$ such that $F = \{f \in \mathcal{F} : (x, f(x)) \in B, \ \forall x \in \mathcal{X}\}$.
\end{definition}

Informally, ``The outcome must be Pareto-optimal" is a blackbox axiom  whose procedural extension is the intraprofile axiom ``The rule must be Pareto-efficient, producing a Pareto-optimal outcome for all profiles."
As another example, recall Example~\ref{ex:running}. Each procedural axiom characterized by a single rule $|F|=1$ is intraprofile, $F = \{f_1,f_2\}$ is intraprofile, and $F = \F$ is intraprofile, but $F = \{f_2, f_3\}$ is not intraprofile (i.e., interprofile). For $F = \{f_2, f_3\}$ a blackbox axiom $A_B$ whose extension is $F$ would need to be characterized by $B = \{(x_1, y_1)$, $(x_2, y_1)$, $(x_1, y_2), (x_2, y_2)\}$, but the procedural extension of $A_B$ is $\F = \{f_1, f_2, f_3, f_4\}$, not $\{f_2, f_3\}$.

\subsection{Intraprofile Axioms in Voting by Axioms}
\cite{schmidtlein2023voting} state that if, ``an [intraprofile] axiom forces outcomes (possibly jointly with other axioms), not only will every forced outcome be consistent with the axiom but the procedure (across all such profiles) will be as well." We prove this is mistaken via counterexample.

\begin{example}\label{ex:VBA_incorrect}
Let $\mathcal{X} = \{x_1, x_2\}$, $\mathcal{Y} = \{y_1, y_2, y_3\}$, and consider the following subset of rules:

$$\begin{cases} 
    f_1 := f_1(x_1) = y_1, & f_1(x_2) = y_1 \\
    f_2 := f_2(x_1) = y_1, & f_2(x_2) = y_2 \\
    f_3 := f_3(x_1) = y_2, & f_3(x_2) = y_3 \\
    f_4 := f_4(x_1) = y_3, & f_4(x_2) = y_3 \\
    f_5 := f_5(x_1) = y_1, & f_5(x_2) = y_3 \\
\end{cases}$$

Let $\vec{A} = (A_1, A_2)$ be the vector of procedural axioms characterized by $F_1 = \{f_1, f_2\}$ and $F_2 = \{f_3, f_4\}$ respectively.
\end{example}

$A_1$ forces $y_1$ on $x_1$ but is not forcing $x_2$, while $A_2$ forces $y_3$ on $x_2$ but is not forcing on $x_1$, and both $A_1$ and $A_2$ are intraprofile. The implied rule of Voting by Axioms is $f_5$, and yet decisions made by $f_5$ will not satisfy either axiom. Thus, the Decision-Evaluation Paradox arises in Voting by Axioms even when all axioms are intraprofile. Similarly $OR(\vec{A})$ and $AND(\vec{A})$ may not be intraprofile even when all the constituent axioms are, which also follows from Example~\ref{ex:VBA_incorrect}.

\section{Conclusions}
We have proposed Axiomatic Choice, a domain-agnostic framework for making and evaluating decisions using axioms to encode normative positions, where axioms are defined as properties of decisions. We have focused on establishing the fundamental challenges that hold across all decision domains, including the Decision-Evaluation Paradox and vulnerability to deception, and showed how these challenges depend on the taxonomic class of the axiom being considered.  

We generalize properties of procedural axioms in the existing literature to make them domain-agnostic and apply to all axioms, and use them to derive the Decision-Evaluation Paradox. Our paradox places a fundamental limit on the use and transparency of axiom-based decision-making procedures, including Voting by Axioms from~\cite{schmidtlein2023voting} the method rooted in ``procedural autonomy" proposed by~\cite{dietrich2005reach}. A common concern is whether a decision is sufficiently transparent, which depends in part on whether the underlying procedure is auditable. We shed light on the nature of transparency and deception in explaining and justifying decisions, particularly when such decision evaluations are individualized, and show how the decision-evaluation paradox can lead to deception. Finally, we build on~\cite{schmidtlein2023voting} to provide a minimal, domain-agnostic, language-independent definition of intraprofile axioms, and show that such axioms do not evade the Decision-Evaluation Paradox in Voting by Axioms.

\subsection{Future Work}
Our domain-agnostic framework can be extended to consider orderings over decisions, which generalizes the orderings over orderings proposed by~\cite{sen2017collective}. It can be further extended by assigning cardinal values to decisions, capturing path-dependence where an agent's utility depends on how a decision gets made.

For any specific domain, our taxonomy may be further refined by what information is necessary and sufficient for evaluation. Similarly, there are open questions around how to elicitation normative positions, interpret natural language statements as axioms, and the complexity of decision-making and evaluation in any given domain. Operationalizing axioms is an additional challenge when $\D$ is large or infinite. Finally, the identification of counterfactual deception depends on the causal model one chooses, which will also tend to be domain-specific or application-specific, and may differ between agents.

\pagebreak
\appendix

\section*{Ethical Statement}

There are no ethical issues.


\bibliographystyle{named}
\bibliography{ijcai26}

\end{document}